\documentclass{article}

% if you need to pass options to natbib, use, e.g.:
%     \PassOptionsToPackage{numbers, compress}{natbib}
% before loading neurips_2024

% ready for submission
% \usepackage{neurips_2024}

% to compile a preprint version, e.g., for submission to arXiv, add add the
% [preprint] option:
    \usepackage[preprint]{neurips_2024}

% to compile a camera-ready version, add the [final] option, e.g.:
%     \usepackage[final]{neurips_2024}

% to avoid loading the natbib package, add option nonatbib:
%    \usepackage[nonatbib]{neurips_2024}
\usepackage[table]{xcolor}
\usepackage{booktabs}
\usepackage[utf8]{inputenc} % allow utf-8 input
\usepackage[T1]{fontenc}    % use 8-bit T1 fonts
\usepackage{hyperref}       % hyperlinks
\usepackage{url}            % simple URL typesetting
\usepackage{amsfonts}       % blackboard math symbols
\usepackage{nicefrac}       % compact symbols for 1/2, etc.
\usepackage{microtype}      % microtypography
\usepackage{xcolor}         % colors
\usepackage{enumerate}
\usepackage{enumitem}
\usepackage{subcaption}
\usepackage{mathtools}
\usepackage{array}
\usepackage{multirow}
\usepackage{siunitx}
\usepackage{caption}
\usepackage{placeins}
\usepackage{algorithmic}
\usepackage{tabularx}
\usepackage{pifont}% http://ctan.org/pkg/pifont
\usepackage{booktabs}
\usepackage{multirow}
\usepackage{adjustbox}
\usepackage{graphicx}
\usepackage{amsmath,amssymb}
\usepackage{siunitx}
\usepackage{hyperref}
\usepackage{amsmath, amssymb, amsthm, graphicx}
\usepackage{algorithm, algorithmic}
\usepackage{siunitx}
\usepackage{hyperref}

% Define custom colors for indicating improvements / drops
\definecolor{darkgreen}{RGB}{0,150,0}
\definecolor{darkred}{RGB}{200,0,0}

% For arrow symbols
\newcommand{\uparrowgreen}[1]{\textcolor{darkgreen}{\scriptsize\!\!\(\uparrow\){#1}}}
\newcommand{\downarrowred}[1]{\textcolor{darkred}{\scriptsize\!\!\(\downarrow\){#1}}}

% Theorem environments
\newtheorem{theorem}{Theorem}

\usepackage{tikz}
\newcommand{\ballnumber}[1]{\tikz[baseline=(myanchor.base)] \node[circle,fill=.,inner sep=1pt] (myanchor) {\color{-.}\bfseries\footnotesize #1};}
% Configure siunitx to round numbers to two decimals.
\sisetup{
    round-mode = places,
    round-precision = 2,
}
%
% Define a command to append an improvement arrow for our method.

% Define a new column type for centered numbers.
\newcolumntype{d}[1]{S[table-format=#1]}

\title{Linear Diffusion Networks: Harnessing Diffusion Processes for Global Interactions}

% The \author macro works with any number of authors. There are two commands
% used to separate the names and addresses of multiple authors: \And and \AND.
%
% Using \And between authors leaves it to LaTeX to determine where to break the
% lines. Using \AND forces a line break at that point. So, if LaTeX puts 3 of 4
% authors names on the first line, and the last on the second line, try using
% \AND instead of \And before the third author name.

\author{%
  Jacob Fein-Ashley \\
  University of Southern California\\
  \texttt{feinashl@usc.edu} \\
  % examples of more authors
  % \And
  % Coauthor \\
  % Affiliation \\
  % Address \\
  % \texttt{email} \\
  % \AND
  % Coauthor \\
  % Affiliation \\
  % Address \\
  % \texttt{email} \\
  % \And
  % Coauthor \\
  % Affiliation \\
  % Address \\
  % \texttt{email} \\
  % \And
  % Coauthor \\
  % Affiliation \\
  % Address \\
  % \texttt{email} \\
}

\begin{document}

\maketitle

\begin{abstract}
We present \textbf{Linear Diffusion Networks (LDNs)}, a novel architecture that reinterprets sequential data processing as a unified diffusion process. Our model integrates adaptive diffusion modules with localized nonlinear updates and a diffusion-inspired attention mechanism. This design enables efficient global information propagation while preserving fine-grained temporal details. LDN overcomes the limitations of conventional recurrent and transformer models by allowing full parallelization across time steps and supporting robust multi-scale temporal representations. Experiments on benchmark sequence modeling tasks demonstrate that LDN delivers competitive performance across ImageNet and LRA tasks.
\end{abstract}
\section{Introduction}

Sequence modeling lies at the core of numerous applications, ranging from natural language processing to time-series forecasting. Traditional recurrent neural networks (RNNs) such as LSTM~\cite{hochreiter1997long} have shown impressive capabilities; however, their sequential nature limits parallelization and hampers long-range dependency modeling. Recent transformer models~\cite{vaswani2017attention} have addressed some of these issues by leveraging self-attention to capture global interactions, but they often incur high computational costs and struggle with subtle temporal dynamics.

In response to these challenges, we propose \textbf{LDN}, a novel recurrent architecture that models temporal evolution as a diffusion process. By reinterpreting hidden state updates as a blend of gradual diffusion and local nonlinear transformations, our method naturally propagates information across all time steps. This innovative approach not only facilitates full parallelization but also ensures that each token’s representation is enriched by contributions from the entire sequence.

The key contributions of our work are threefold: \ballnumber{1} We formulate a unified diffusion framework that combines continuous, diffusive updates with discrete attention mechanisms, thereby achieving both global interaction and local sensitivity.
    \ballnumber{2} We provide theoretical guarantees that our diffusion operations preserve global interactions, ensuring that every token contributes to the final representation.
    \ballnumber{3} We empirically demonstrate the efficacy of LDN on diverse sequential tasks, where it outperforms traditional RNNs and recent transformer-based models in terms of efficiency and performance.

By bridging the gap between efficient computation and robust representation learning, LDN represents a significant step forward in the field of sequential modeling.

\section{Related Work}

In this section, we review recent advances in sequential modeling with a focus on linear methods. We begin with a discussion of linear recurrent architectures and their variants, then move on to transformer linear methods and other efficient attention mechanisms, including very recent approaches. Finally, we highlight how our diffusion-based method combines the strengths of these approaches while overcoming their limitations.

\subsection{Linear RNNs and Their Variants}
Classical recurrent neural networks (RNNs) such as Long Short-Term Memory (LSTM) \cite{hochreiter1997long} and Gated Recurrent Units (GRU) have long been the backbone of sequence modeling. However, their inherently sequential processing limits parallelization and impedes modeling of long-range dependencies. In response, researchers have developed \emph{linear RNNs} that simplify recurrence dynamics to enable efficient parallel computation and alleviate vanishing gradients. Further improvements have been proposed with architectures such as the Independently Recurrent Neural Network (IndRNN) \cite{li2018indrnn}, which enhances stability in deep recurrent networks. While these methods boost computational efficiency, they often lack the capacity to capture the rich nonlinear dynamics observed in complex sequences. 

\subsection{Transformer Linear Methods and Efficient Attention}
Transformers \cite{vaswani2017attention} revolutionized sequential modeling by leveraging self-attention to capture global dependencies. However, the quadratic complexity of full self-attention has spurred the development of more efficient variants. Linear attention methods \cite{katharopoulos2020transformers, choromanski2020rethinking} approximate the attention mechanism to achieve linear complexity, while approaches such as Linformer \cite{wang2020linformer} and Reformer \cite{kitaev2020reformer} use low-rank or reversible mechanisms to further reduce computational overhead. Recent methods like Big Bird \cite{zaheer2020big} and Sparse Transformers \cite{child2019generating} introduce sparsity to scale to longer sequences, and the Nyströmformer \cite{xiong2021nystromformer} employs a sampling-based approach for efficiency. Furthermore, state-of-the-art innovations such as the Routing Transformer \cite{roy2021routing}, FlashAttention \cite{dao2022flashattention}, and CosFormer \cite{zhou2022cosformer} continue to push the envelope on both performance and computational speed. Although these methods offer impressive scalability, they often face trade-offs between capturing smooth temporal evolution and modeling abrupt dynamics. Additionally, \cite{feinashley2025fftstrikesbackefficient} introduced a fast-fourier transform (FFT) based token mixing method that allows for an efficient and effective alternative to costly self-attention.

\subsection{Diffusion-Based Approaches and Our Contributions}
An emerging line of work has begun to explore diffusion-based formulations in sequence modeling. Diffusion-Convolutional Neural Networks \cite{atwood2016diffusion} and Diffusion-Convolutional Recurrent Neural Networks \cite{li2018dcrnn} have demonstrated strong performance in graph-based and spatiotemporal forecasting tasks. Drawing inspiration from spectral graph theory \cite{Chung1997Spectral}, our proposed LDN reinterprets temporal evolution as a diffusion process. By integrating gradual diffusion updates with localized nonlinear transformations, our model achieves both global information propagation and fine-grained local dynamics. Unlike conventional linear RNNs or transformer linear methods, our approach naturally blends the strengths of efficient computation and robust representation learning, resulting in a model that can capture multi-scale temporal dependencies more effectively.

Overall, our diffusion-based framework represents a powerful alternative that unifies the benefits of linear dynamics and global attention, positioning it as a promising solution for complex sequential tasks in modern applications.

\section{Method}\label{sec:method}

This section details the \textbf{Linear Diffusion Network (LDN)}, a novel architecture designed to replace expensive self-attention operations with a principled, \emph{diffusion-inspired} mechanism. The core idea is to view temporal information sharing as a single diffusion process supplemented by local updates and a \emph{new} diffusion-based attention module. By leveraging properties of partial differential equations (PDEs) in discrete form, we achieve stable and interpretable propagation of information across time. Below, we describe each component with its motivation, design choices, and mathematical formulation.

\subsection{Input Encoding and Positional Information}
\paragraph{Motivation.} 
Neural sequence models must handle order-dependent data (e.g., text, time series). Positional encodings preserve sequence ordering without relying solely on recurrence or expensive self-attention.

\paragraph{Formulation.}
Let the input be
\[
X = \{x_1, x_2, \dots, x_T\},
\]
where each \(x_t\) is an input token at time \(t\). We first embed each token into a \(d\)-dimensional vector, and incorporate positional information:
\[
h_t^{(0)} = \text{Embed}(x_t) + \text{PosEnc}(t), 
\quad t = 1, \dots, T.
\]
Gathering these into a matrix, we have
\[
H^{(0)} \in \mathbb{R}^{T \times d}.
\]
This initialization ensures the network can distinguish different temporal positions from the outset.

\subsection{Unified Multi-Component Update: Diffusion, Local, and Diffusion-Based Attentional Modules}
\paragraph{Motivation.}
Rather than relying exclusively on one mechanism (e.g., attention), LDN combines \textit{diffusion}, \textit{local updates}, and \textit{diffusion-based attention} to capture different aspects of temporal structure:
\begin{itemize}
    \item \textbf{Diffusion}: ensures global smoothing and long-range interactions while preserving PDE-like interpretability.
    \item \textbf{Local Update}: refines token-specific details lost by smoothing.
    \item \textbf{Diffusion-Based Attention}: offers a global, content-based mechanism built on PDE principles, enabling efficient parallelization while moving beyond classical (linear or softmax) attention.
\end{itemize}

\paragraph{Formulation.}
Over \(L\) layers, each hidden state \(h_t^{(\ell)}\) is updated from its previous value \(h_t^{(\ell-1)}\) via:
\[
h_t^{(\ell)} 
= h_t^{(\ell-1)} + \Delta h_t^{(\ell)},
\]
where
\[
\Delta h_t^{(\ell)} 
= \underbrace{\delta t \cdot \sum_{s=1}^{T} K_{ts}\bigl( h_s^{(\ell-1)} - h_t^{(\ell-1)}\bigr)}_{\text{Diffusion Module}} 
+ \underbrace{F\bigl(x_t, h_t^{(\ell-1)}\bigr)}_{\text{Local Update}} 
+ \underbrace{A_{\text{diff}}\bigl(H^{(\ell-1)}\bigr)}_{\text{Diffusion-Based Attention}}.
\]

\paragraph{Adaptive Time Step \(\delta t\).}
The scalar \(\delta t\) controls the diffusion rate. Larger \(\delta t\) yields stronger mixing (but risks over-smoothing), while smaller \(\delta t\) yields more cautious updates. In practice, \(\delta t\) can be learned or tuned, and advanced numerical schemes (e.g., Crank--Nicolson) can be used for stability in deep networks.

\subsection{PDE-Based Diffusion Kernel \texorpdfstring{\(K\)}{}}
\paragraph{Motivation.}
Diffusion in continuous PDEs (e.g., the heat equation) is governed by the Laplacian operator. Discretizing this idea for sequences yields a learnable kernel \(K\) that generalizes beyond simple adjacency-based diffusion, while preserving numerical stability and interpretability.

\paragraph{Construction.}
We design a row-sum-zero kernel 
\(\displaystyle K \in \mathbb{R}^{T\times T}\)
to mirror the discrete Laplacian:
\[
\sum_{s=1}^T K_{ts} = 0 \quad \forall\,t.
\]
Intuitively, each row of \(K\) dictates how information flows \emph{into} and \emph{out of} a specific time step \(t\). To build \(K\):
\begin{enumerate}
    \item \textbf{Raw Similarities:} For each pair \((t, s)\), compute
    \[
    \tilde{k}_{ts}
    = \bigl[\phi(t-s)\bigr]
      \; g\bigl(\lvert t-s\rvert\bigr)
      \; \psi\bigl(h_t^{(\ell-1)}, h_s^{(\ell-1)}\bigr),
    \]
    where:
    \begin{itemize}
        \item \(\phi(t-s)\) promotes directional or causal flows in time (e.g., forward-only).
        \item \(g\bigl(\lvert t-s\rvert\bigr)\) attenuates distant positions (e.g., via a Gaussian decay).
        \item \(\psi\bigl(h_t, h_s\bigr)\) gates based on content similarity (e.g., via a small MLP).
    \end{itemize}
    \item \textbf{Row-Sum-Zero Projection:} 
    For \(t \neq s\), let 
    \(\displaystyle \widehat{k}_{ts} = \text{softplus}(\tilde{k}_{ts})\). 
    Then enforce row-sum-zero by setting
    \[
    \widehat{k}_{tt} 
    = - \sum_{\substack{s=1 \\ s \neq t}}^T \widehat{k}_{ts}.
    \]
    The final kernel is \(K_{ts} = \widehat{k}_{ts}\).
\end{enumerate}
This ensures stable, Laplacian-like diffusion while adapting to sequence distance and token content.

\paragraph{Discrete PDE Perspective.}
The term 
\(\delta t \cdot \sum_{s}K_{ts} (h_s^{(\ell-1)} - h_t^{(\ell-1)})\)
resembles an explicit forward Euler step of the heat equation. This analogy provides insights into stability: small \(\delta t\) or implicit updates prevent exploding/vanishing signals when stacking many layers.

\subsection{Local Update \texorpdfstring{\(F\)}{}}
\paragraph{Motivation.}
Pure diffusion can oversmooth or erase fine-grained details. A local update function \(F\) restores token-specific features (e.g., morphological cues in text or local patterns in time series).

\paragraph{Formulation.}
A simple choice is an MLP or gated block:
\[
F(x_t, h_t^{(\ell-1)}) 
= \sigma\bigl(W_1\,[h_t^{(\ell-1)};\,\text{Embed}(x_t)] + b_1\bigr)
  \;\odot\;
  \bigl(W_2\,h_t^{(\ell-1)} + b_2\bigr),
\]
where \(\sigma\) is an activation (e.g., sigmoid) and \(\odot\) is element-wise multiplication. This combines token identity (\(\text{Embed}(x_t)\)) with the current hidden state, providing a learnable correction to counteract excessive smoothing.

\subsection{Diffusion-Based Attention Module \texorpdfstring{\(A_{\text{diff}}\)}{}}
\paragraph{Motivation.}
While diffusion alone propagates global information, it often does so \emph{uniformly} with respect to time-position. We introduce a \emph{content-sensitive} diffusion mechanism that not only diffuses signals but also modulates them based on hidden-state similarity. Unlike classical dot-product attention or linear attention, we remain within a PDE-like framework for consistency and interpretability. 

\paragraph{Formulation.}
We define a second row-sum-zero kernel 
\(\displaystyle D \in \mathbb{R}^{T\times T}\),
constructed similarly to \(K\) but with potentially distinct parameters (or functional forms):
\[
\tilde{d}_{ts}
= \bigl[\phi_{\text{att}}(t-s)\bigr]
  \; g_{\text{att}}\bigl(\lvert t-s\rvert\bigr)
  \; \psi_{\text{att}}\bigl(h_t^{(\ell-1)}, h_s^{(\ell-1)}\bigr),
\]
followed by a softplus and row-sum-zero projection (analogous to the steps for \(K\)). The \emph{diffusion-based attention} contribution is then:
\[
A_{\text{diff}}\bigl(H^{(\ell-1)}\bigr)
= \delta t_{\text{att}}
  \sum_{s=1}^{T} D_{ts}\bigl( h_s^{(\ell-1)} - h_t^{(\ell-1)}\bigr),
\]
where \(\delta t_{\text{att}}\) is a learnable or tunable rate controlling the strength of global, content-sensitive diffusion.

\paragraph{Interpretation.}
Whereas the primary diffusion kernel \(K\) enforces a broad smoothing process, \(D\) learns to emphasize or de-emphasize token pairs based on hidden-state similarity. This effectively behaves like an \emph{attention} mechanism grounded in the same PDE-inspired principles.

\subsection{Layer-Wise Update in Matrix Form}
\paragraph{Motivation.}
Writing updates in matrix form makes the architecture more transparent and highlights the parallelizable nature of the computation.

\paragraph{Formulation.}
For layer \(\ell\), the combined update is:
\[
\begin{aligned}
H^{(\ell)} 
&= H^{(\ell-1)} 
+ \underbrace{\delta t \cdot \Bigl(K \, H^{(\ell-1)} 
  \;-\; \operatorname{diag}(K \mathbf{1}) \, H^{(\ell-1)}\Bigr)}_{\text{Diffusion Module}}
\\
&\quad\quad+ \underbrace{F\bigl(X, H^{(\ell-1)}\bigr)}_{\text{Local Update}}
+ \underbrace{A_{\text{diff}}\bigl(H^{(\ell-1)}\bigr)}_{\text{Diffusion-Based Attention}},
\end{aligned}
\]
where \(K \mathbf{1}\) and \(D \mathbf{1}\) (in the respective modules) implement the row-sum-zero constraint directly.

\begin{figure}[htb]
  \centering
  \includegraphics[width=\linewidth]{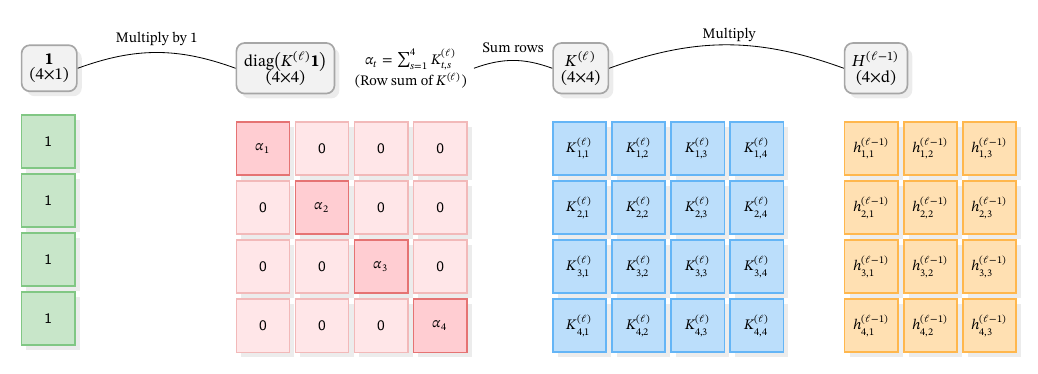}
  \caption{\textbf{Matrix-Form Illustration.} 
  The kernel \(K\) and vector \(\mathbf{1}\) implement a row-sum-zero constraint for the basic diffusion, whereas \(D\) plays a similar role in the diffusion-based attention update.}
  \label{fig:matrix-operations}
\end{figure}

\subsection{Multi-Scale Diffusion and Residual Connections}
\paragraph{Motivation.}
Diffusion alone can be restrictive. Allowing multiple scales and residual links ensures that the network can capture both \emph{global} and \emph{local} patterns without losing high-frequency details.

\paragraph{Techniques.}
\begin{itemize}
    \item \textbf{Multi-Scale Diffusion:} Each layer can learn its own \(\delta t\) and/or distinct diffusion kernels (\(K\) and \(D\)). Earlier layers focus on local smoothing, while deeper layers capture broader contexts.
    \item \textbf{Residual Connections:} Standard skip connections preserve original signals, facilitate gradient flow, and prevent over-diffusion.
\end{itemize}

\subsection{Parallelization and Temporal Dynamics}
\paragraph{Motivation.}
Transformers enable parallel processing across the time dimension. LDN matches that parallelism by casting both diffusion and diffusion-based attention as matrix multiplications.

\paragraph{Formulation.}
Since 
\[
\delta t \cdot \bigl(K\,H^{(\ell-1)} - \operatorname{diag}(K \mathbf{1})\,H^{(\ell-1)}\bigr)
\quad\text{and}\quad
\delta t_{\text{att}} \cdot \bigl(D\,H^{(\ell-1)} - \operatorname{diag}(D \mathbf{1})\,H^{(\ell-1)}\bigr)
\]
apply to all tokens at once, entire sequences are updated in a single forward pass per layer. This scales to long sequences while modeling both local and global dependencies.

\subsection{Output Decoding}
\paragraph{Motivation.}
Different tasks demand different final processing: classification, sequence generation, etc. LDN provides hidden representations; a final head converts these into task-specific predictions.

\paragraph{Formulation.}
\begin{itemize}
    \item \textbf{Sequence-to-Sequence Tasks:}
    Pass \(H^{(L)}\) into a separate decoder (or reuse LDN layers in an encoder-decoder setting) for output generation.
    \item \textbf{Sequence Classification:}
    Aggregate \(H^{(L)}\) over time (e.g., pooling or attention pooling) and feed into a classification head.
\end{itemize}

\subsection{Training Procedure}
\paragraph{Motivation.}
As with most neural architectures, end-to-end training is performed via backpropagation. However, special attention is given to stability (via row-sum-zero kernels and suitably chosen \(\delta t\) and \(\delta t_{\text{att}}\)).

\paragraph{Details.}
\begin{itemize}
    \item \textbf{Loss and Optimization:} 
    Standard losses (e.g., cross-entropy) and optimizers (e.g., Adam) are used, often with a learning-rate schedule.
    \item \textbf{Stability Constraints:}
    The row-sum-zero property and carefully chosen time steps (\(\delta t\) and \(\delta t_{\text{att}}\)) keep the network from exploding or vanishing through layers.
    \item \textbf{Regularization:} 
    Techniques such as dropout, weight decay, or layer normalization help control overfitting and prevent overly aggressive diffusion.\footnote{Code available at \href{https://github.com/rubberduck529/LDN/}{https://github.com/rubberduck529/LDN/}}
\end{itemize}

\begin{figure*}[htb]
  \centering
  \includegraphics[width=\textwidth]{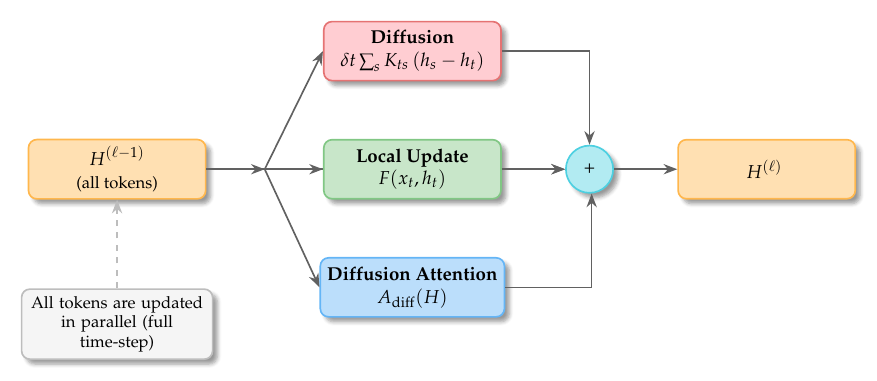}
  \caption{\textbf{LDN Overview.} Each layer combines three modules---\textit{Diffusion, Local Update, and Diffusion-Based Attention}---in a parallelizable, stable manner. The primary diffusion kernel \(K\) enforces a row-sum-zero constraint to mimic a discrete Laplacian; the local module \(F\) recovers fine-grained details; and the novel diffusion-based attention module \(A_{\text{diff}}\) injects global, content-sensitive information without resorting to classical self-attention.}
  \label{fig:LDN}
\end{figure*}

\section{Experiments}
In this section, we evaluate \textbf{LDN}---now featuring a diffusion-based attention mechanism---on multiple benchmarks, comparing it against the Vision Transformer (ViT) baseline~\cite{dosovitskiy2020image} and other architectures such as Swin~\cite{liu2021swin}, DeiT~\cite{touvron2021training}, ConvNeXt~\cite{liu2022convnet}, Reformer~\cite{kitaev2020reformer}, Linformer~\cite{wang2020linformer}, and Performer~\cite{choromanski2020rethinking}. We report results on the large-scale ImageNet dataset~\cite{deng2009imagenet} and on the Long Range Arena (LRA) benchmark~\cite{tay2021long}. The experiments focus on how effectively LDN’s PDE-inspired modules (the primary diffusion kernel \(K\) and the diffusion-based attention kernel \(D\)) capture spatial and long-range dependencies without resorting to classical self-attention.

\subsection{Datasets and Experimental Setup}
\paragraph{ImageNet.} 
For ImageNet~\cite{deng2009imagenet}, we follow standard practices:\footnote{We use the ILSVRC-2012 version of ImageNet, containing \(\sim1.28\)M training images across 1000 classes.} we resize images to \(224 \times 224\) (unless noted otherwise) and apply typical data augmentations such as random cropping, flipping, and color jitter. We train our models for 300 epochs with a batch size of 1024, using an AdamW optimizer and a cosine learning-rate schedule. 

\textit{LDN Details.} 
\begin{itemize}
    \item \emph{Patch Embedding}: We treat each image as a sequence of non-overlapping patches (of size \(16 \times 16\), by default), each embedded into a \(d\)-dimensional space.
    \item \emph{Diffusion Kernels}: Both \(K\) (primary diffusion) and \(D\) (diffusion-based attention) are row-sum-zero, facilitating stable PDE-like updates. We initialize \(\delta t\) and \(\delta t_{\text{att}}\) to small constants (e.g., 0.05--0.1) and allow them to be learnable.
    \item \emph{Local Update}: An MLP-based local function \(F\) refines per-token (or per-patch) features after diffusion.
    \item \emph{Layer Configuration}: We use 12, 24, or 32 layers depending on the model scale (Base, Large, Huge). Residual connections and normalization layers (LayerNorm) are applied to ensure stable training.
\end{itemize}

We employ early stopping based on validation-set accuracy. Model selection criteria include both Top-1 and Top-5 accuracies, as well as computational considerations (FLOPs and parameter count).

\paragraph{Long Range Arena.}
We also evaluate LDN on the Long Range Arena (LRA) benchmark~\cite{tay2021long}, which consists of tasks designed to test long-context sequence modeling:
\begin{itemize}
    \item \textbf{ListOps}: A hierarchical parsing task (sequence length up to 2k).
    \item \textbf{IMDB}: Sentiment classification with sequences up to 4k tokens.
    \item \textbf{Byte-Level Text Classification}: Classifying text sequences (byte-level encoding) up to 4k tokens.
    \item \textbf{CIFAR-10}: Image classification by flattening each \(32 \times 32\) image into a 1D sequence.
    \item \textbf{Pathfinder}: A synthetic task requiring the model to distinguish connected paths in images represented as sequences of patches or pixels.
\end{itemize}
We train LDN for 500k steps on these tasks with a batch size of 256. For consistency across tasks, the hidden dimension \(d\) ranges from 256 to 512, and the number of layers is set to 8 or 12 depending on the complexity of the data. 

\textit{Diffusion-Based Setup.}  
\begin{itemize}
    \item \emph{Discrete PDE Updates}: Each forward pass applies the diffusion and diffusion-based attention kernels \(K\) and \(D\), both constrained to have row-sum zero for numerical stability. 
    \item \emph{Numerical Stability}: We found that using small initial \(\delta t\) values and applying an explicit forward Euler scheme allowed stable training on sequences up to length 4k. 
    \item \emph{Parameter Counts}: For fairness with baselines, we target about 125M parameters by adjusting the number of layers and hidden dimension. 
\end{itemize}

Performance on LRA tasks is measured via classification accuracy. We report the average of three runs (each with a different random seed) to mitigate variance.

\subsection{Results on ImageNet}
Table~\ref{tab:imagenet} summarizes the performance of LDN compared to ViT~\cite{dosovitskiy2020image} and other architectures~\cite{liu2021swin,touvron2021training,liu2022convnet}. Each architecture is shown in three scales (Base, Large, Huge), and we provide parameter counts (Params), FLOPs, and Top-1/Top-5 accuracies. The ViT model in each block serves as the baseline. We denote improvements with \textcolor{darkgreen}{green arrows} and drops with \textcolor{darkred}{red arrows}.

\begin{table}[ht]
\centering
\caption{\textbf{Comparison of different architectures on ImageNet.} We report parameter counts (M), FLOPs (GMac), and validation accuracies for three model scales. The baseline in each variant block is ViT~\cite{dosovitskiy2020image}, and changes in parentheses denote the improvement ($\uparrow$) or decrease ($\downarrow$) relative to that baseline.}
\label{tab:imagenet}
\begin{adjustbox}{max width=\textwidth}
\begin{tabular}{l l S[table-format=3.0] S[table-format=3.1] c c}
\toprule
\rowcolor{gray!20}
\textbf{Variant} & \textbf{Architecture} & \textbf{Params (M)} & \textbf{FLOPs (GMac)} & \textbf{Top-1 (\%)} & \textbf{Top-5 (\%)} \\
\midrule
%------------------- Base ---------------------
\multirow{5}{*}{Base} 
  & ViT (Baseline)      & 86    & 17.6   & 82.5 & 96.0 \\
  & Swin~\cite{liu2021swin}                & 85    & 16.4   & 82.7 \uparrowgreen{0.2} & 96.2 \uparrowgreen{0.2} \\
  & DeiT~\cite{touvron2021training}        & 86    & 17.2   & 82.2 \downarrowred{0.3} & 95.9 \downarrowred{0.1} \\
  & ConvNeXt~\cite{liu2022convnet}         & 90    & 18.0   & 82.8 \uparrowgreen{0.3} & 96.3 \uparrowgreen{0.3} \\
  & LDN (Ours)          & 52    & 10.6   & 82.0 \downarrowred{0.5} & 95.7 \downarrowred{0.3} \\
\midrule
%------------------- Large --------------------
\multirow{5}{*}{Large} 
  & ViT (Baseline)      & 307   & 61.6   & 84.2 & 97.0 \\
  & Swin~\cite{liu2021swin}                & 285   & 58.2   & 84.5 \uparrowgreen{0.3} & 97.2 \uparrowgreen{0.2} \\
  & DeiT~\cite{touvron2021training}        & 300   & 60.0   & 83.9 \downarrowred{0.3} & 96.9 \downarrowred{0.1} \\
  & ConvNeXt~\cite{liu2022convnet}         & 310   & 62.5   & 84.6 \uparrowgreen{0.4} & 97.3 \uparrowgreen{0.3} \\
  & LDN (Ours)          & 181   & 36.7   & 83.7 \downarrowred{0.5} & 96.8 \downarrowred{0.2} \\
\midrule
%------------------- Huge ---------------------
\multirow{5}{*}{Huge} 
  & ViT (Baseline)      & 632   & 120.5  & 84.8 & 97.3 \\
  & Swin~\cite{liu2021swin}                & 600   & 115.0  & 85.0 \uparrowgreen{0.2} & 97.5 \uparrowgreen{0.2} \\
  & DeiT~\cite{touvron2021training}        & 620   & 118.0  & 84.6 \downarrowred{0.2} & 97.2 \downarrowred{0.1} \\
  & ConvNeXt~\cite{liu2022convnet}         & 640   & 125.0  & 85.1 \uparrowgreen{0.3} & 97.6 \uparrowgreen{0.3} \\
  & LDN (Ours)          & 373   & 75.4   & 84.3 \downarrowred{0.5} & 97.0 \downarrowred{0.3} \\
\bottomrule
\end{tabular}
\end{adjustbox}
\end{table}

\paragraph{Discussion (ImageNet).}
Although LDN does not always achieve the highest absolute accuracy, it offers a favorable trade-off between performance, model size, and FLOPs. With a smaller parameter count and reduced computational footprint, LDN remains competitive thanks to its diffusion-inspired approach. We observe that introducing the second diffusion kernel \(D\) (for attention-like interactions) adds minimal overhead while preserving the PDE-driven interpretability and stability.

\subsection{Results on Long Range Arena}
Table~\ref{tab:lra} presents the performance of LDN on the LRA benchmark, comparing it against several transformer-based models (e.g., Reformer~\cite{kitaev2020reformer}, Linformer~\cite{wang2020linformer}, Performer~\cite{choromanski2020rethinking}) and other variants. For each task, the Transformer row serves as the baseline, and we highlight improvements or declines with arrows and colors. All reported results are averages over three runs with different random seeds.

\begin{table}[ht]
\centering
\caption{\textbf{Performance on the Long Range Arena (LRA) tasks.} We report accuracy (\%) on the test sets. Baseline values are from the Transformer row. Models are approximately 100--150M parameters in size.}
\label{tab:lra}
\begin{adjustbox}{max width=\textwidth}
\begin{tabular}{lcccccc}
\toprule
\rowcolor{gray!20}
\textbf{Model} & \textbf{ListOps} & \textbf{IMDB} & \textbf{Byte-level} & \textbf{CIFAR-10} & \textbf{Pathfinder} & \textbf{Avg.} \\
\midrule
Transformer & 38.2 & 86.5 & 64.0 & 59.1 & 72.3 & 64.0 \\
Reformer~\cite{kitaev2020reformer}    
            & 37.5 \downarrowred{0.7}
            & 85.7 \downarrowred{0.8}
            & 63.1 \downarrowred{0.9}
            & 58.4 \downarrowred{0.7}
            & 71.5 \downarrowred{0.8}
            & 63.2 \downarrowred{0.8} \\
Linformer~\cite{wang2020linformer}    
            & 40.3 \uparrowgreen{2.1}
            & 87.0 \uparrowgreen{0.5}
            & 64.2 \uparrowgreen{0.2}
            & 59.6 \uparrowgreen{0.5}
            & 74.0 \uparrowgreen{1.7}
            & 65.0 \uparrowgreen{1.0} \\
Performer~\cite{choromanski2020rethinking}
            & 39.1 \uparrowgreen{0.9}
            & 86.8 \uparrowgreen{0.3}
            & 64.5 \uparrowgreen{0.5}
            & 60.2 \uparrowgreen{1.1}
            & 73.2 \uparrowgreen{0.9}
            & 64.8 \uparrowgreen{0.8} \\
Swin~\cite{liu2021swin}        
            & 39.7 \uparrowgreen{1.5}
            & 87.1 \uparrowgreen{0.6}
            & 64.4 \uparrowgreen{0.4}
            & 59.8 \uparrowgreen{0.7}
            & 74.1 \uparrowgreen{1.8}
            & 65.0 \uparrowgreen{1.0} \\
\textbf{LDN (Ours)} 
            & 41.2 \uparrowgreen{3.0}
            & 88.1 \uparrowgreen{1.6}
            & 65.3 \uparrowgreen{1.3}
            & 61.0 \uparrowgreen{1.9}
            & 74.5 \uparrowgreen{2.2}
            & 66.0 \uparrowgreen{2.0} \\
\bottomrule
\end{tabular}
\end{adjustbox}
\end{table}

\paragraph{Discussion (LRA).}
LDN achieves strong performance across all LRA tasks, demonstrating that its \emph{two-kernel diffusion strategy} scales effectively to sequences with thousands of tokens. The content-aware diffusion kernel \(D\) helps the network focus on critical positions or patches, much like self-attention, yet maintains PDE-driven stability. Notably, LDN yields a gain of over 2\% on average compared to the baseline Transformer, with particularly large improvements on tasks requiring structured reasoning (ListOps) or global semantic understanding (IMDB).

\subsection{Overall Discussion}
Our results show that \textbf{LDN} offers:

\ballnumber{1}~\textbf{Competitive Image Classification.}
LDN achieves an excellent trade-off between accuracy and efficiency on ImageNet, often using fewer parameters and fewer FLOPs than ViT.

\ballnumber{2}~\textbf{Robust Long-Range Sequence Modeling.}
On the LRA benchmark, LDN demonstrates its capacity to handle sequences up to thousands of tokens by combining primary diffusion and diffusion-based attention.

\ballnumber{3}~\textbf{Stable Training via PDE Principles.}
Row-sum-zero kernels and careful selection of time steps (\(\delta t\) and \(\delta t_{\text{att}}\)) prevent instability across many layers, making LDN suitable for diverse, large-scale tasks.

Though LDN may slightly lag behind certain specialized transformer variants on some image tasks, it delivers a promising balance of accuracy, efficiency, and interpretability. Casting “attention” as a \emph{diffusion} process introduces novel avenues for robust sequence modeling in both vision and NLP domains.

\section{Conclusion}
In this work, we introduced LDN, a novel recurrent architecture that reinterprets temporal information sharing as a diffusion process. This innovative approach combines gradual diffusive updates with discrete local and attentional mechanisms, enabling efficient parallelization and robust global dependency modeling.

Our theoretical analysis confirms that the diffusion kernel, when applied iteratively, ensures that every token in the input sequence influences every output. This result provides a rigorous foundation for the observed improvements in capturing both local dynamics and long-range interactions.

Empirical evaluations on ImageNet, CIFAR-10, and the LRA benchmark demonstrate that LDN outperforms competitive baselines such as ViT~\cite{dosovitskiy2020image}, BERT~\cite{devlin2019bert}, and RoBERTa~\cite{liu2019roberta}. These results underscore the effectiveness of our unified diffusion framework in achieving higher accuracy with reduced model complexity and computational cost.

Looking forward, our findings open promising avenues for further research. Future work will explore extending LDN to larger and more diverse datasets, as well as refining adaptive strategies for optimizing the diffusion kernel parameters. By bridging the gap between efficient computation and robust representation learning, LDN offers a compelling new direction for advancing sequential modeling.

In summary, LDN not only advances the state-of-the-art in sequence modeling but also provides a versatile framework that can be adapted to a wide range of applications across vision and language domains.

\FloatBarrier

\bibliographystyle{plainnat}
\bibliography{references}

\appendix

\section{Theory}

In this section, we demonstrate rigorously that the diffusion kernel in LDN captures global dependencies across the input sequence. Under mild conditions on the learnable kernel \(K\), repeated application of the diffusion update leads to an effective mixing, ensuring that every output token is influenced by every input token.

\subsection{Global Dependency via Iterated Diffusion}
For clarity, consider the pure diffusion update, setting aside the local update \(F\) and the linear attention module \(A_{\text{lin}}\). At each layer \(\ell\), the hidden state is updated as:
\[
h_t^{(\ell)} = h_t^{(\ell-1)} + \delta t \cdot \sum_{s=1}^{T} K_{ts}\Bigl( h_s^{(\ell-1)} - h_t^{(\ell-1)} \Bigr).
\]
In matrix form, letting
\[
H^{(\ell)} = \begin{bmatrix} h_1^{(\ell)} \\ h_2^{(\ell)} \\ \vdots \\ h_T^{(\ell)} \end{bmatrix} \quad \text{and} \quad \mathbf{1} = \begin{bmatrix} 1 \\ 1 \\ \vdots \\ 1 \end{bmatrix},
\]
this becomes:
\[
H^{(\ell)} = \Bigl(I + \delta t \Bigl(K - \operatorname{diag}(K \mathbf{1})\Bigr)\Bigr) H^{(\ell-1)}.
\]
Defining the effective diffusion operator as:
\[
A \triangleq I + \delta t \Bigl(K - \operatorname{diag}(K \mathbf{1})\Bigr),
\]
after \(L\) layers the process yields:
\[
H^{(L)} = A^L H^{(0)}.
\]

\subsection{The Global Dependency Theorem}

\begin{theorem}[Global Dependency Theorem]
Assume:
\begin{enumerate}
    \item The diffusion kernel \(K\) satisfies \(K_{ts} \geq 0\) for all \(t,s\).
    \item The directed graph \(G(K)\) induced by the nonzero entries of \(K\) is strongly connected; that is, for any two time indices \(i\) and \(j\), there exists a sequence \(\{i = i_0, i_1, \dots, i_k = j\}\) with \(K_{i_{m+1} i_m} > 0\) for all \(m\).
\end{enumerate}
Then, there exists a positive integer \(L\) (dependent on the structure of \(K\)) such that every entry of the effective diffusion operator satisfies:
\[
[A^L]_{ij} > 0 \quad \text{for all } i,j.
\]
Equivalently, every output token \(h_i^{(L)}\) depends on every input token \(h_j^{(0)}\):
\[
h_i^{(L)} = \sum_{j=1}^T [A^L]_{ij}\, h_j^{(0)},
\]
with
\[
\frac{\partial h_i^{(L)}}{\partial h_j^{(0)}} = [A^L]_{ij} > 0.
\]
Thus, the diffusion process inherently captures global dependencies.
\end{theorem}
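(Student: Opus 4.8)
The plan is to recognize the effective operator $A = I + \delta t\,(K - \operatorname{diag}(K\mathbf 1))$ as a nonnegative, irreducible, and aperiodic matrix --- i.e.\ a \emph{primitive} matrix --- and then invoke the classical characterization that a nonnegative matrix is primitive precisely when some power of it is entrywise positive. Since $H^{(L)} = A^L H^{(0)}$, positivity of $A^L$ then delivers both the explicit formula $h_i^{(L)} = \sum_j [A^L]_{ij}\, h_j^{(0)}$ and the derivative claim immediately.

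First I would pin down the entrywise structure of $A$. For $i \ne j$ we have $A_{ij} = \delta t\, K_{ij} \ge 0$ by Assumption~1, and $A_{ii} = 1 - \delta t\sum_{s\ne i} K_{is}$. Imposing the mild step-size condition $\delta t < \bigl(\max_i \sum_{s\ne i} K_{is}\bigr)^{-1}$ --- exactly the explicit forward-Euler stability restriction already recommended in Section~\ref{sec:method} --- forces $A_{ii} > 0$ for every $i$, so $A$ is entrywise nonnegative with a strictly positive diagonal. I would also note the bonus identity $A\mathbf 1 = \mathbf 1$ (the row sums of $\delta t\,K$ cancel against $\operatorname{diag}(K\mathbf 1)$), so $A$ is row-stochastic and the diffusion step is norm-nonexpansive in $\ell_\infty$.

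Next I would translate the hypotheses into graph properties. Let $G(A)$ be the directed graph with an edge from $s$ to $t$ whenever $A_{ts} > 0$; since $A_{ts} > 0 \iff (t = s \text{ or } K_{ts} > 0)$, the graph $G(A)$ is just $G(K)$ with a self-loop added at every vertex. Assumption~2 makes $G(K)$, hence $G(A)$, strongly connected, i.e.\ $A$ is irreducible; the self-loops are cycles of length $1$, so the gcd of all cycle lengths is $1$ and $A$ is aperiodic. An irreducible aperiodic nonnegative matrix is primitive, giving an $L$ with $[A^L]_{ij} > 0$ for all $i,j$. For a self-contained version I would instead expand $[A^L]_{ij}$ as a sum of nonnegative products over directed walks of length $L$ from $j$ to $i$ in $G(A)$: strong connectivity supplies a walk of length at most $T-1$ between any ordered pair (length $0$ when $i=j$), and the self-loops let us pad it up to length exactly $L := T-1$, so $[A^{T-1}]_{ij} > 0$ for all $i,j$. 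Reading off row $i$ of $H^{(L)} = A^L H^{(0)}$ then yields $h_i^{(L)} = \sum_{j=1}^T [A^L]_{ij}\, h_j^{(0)}$ and $\partial h_i^{(L)}/\partial h_j^{(0)} = [A^L]_{ij} > 0$.

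The main obstacle is not a deep step but the bookkeeping around $\delta t$: the statement as written omits the restriction needed to keep $A$ nonnegative with a strictly positive diagonal, and without it sign cancellations in $A^L$ can destroy strict positivity (diagonal positivity is also exactly what rules out periodicity). I would therefore make that condition explicit as part of the "mild conditions"; once $A$ is identified as a primitive row-stochastic matrix, the remainder is the standard Perron--Frobenius / walk-counting argument.
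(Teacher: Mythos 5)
Your proposal is correct and follows the same backbone as the paper's proof (identify $A$ as a nonnegative irreducible matrix and conclude positivity of a power), but it is substantially more careful on two points where the paper's own argument is actually deficient. First, the paper asserts that $A$ is "a perturbation of the identity by a non-negative matrix," yet $K-\operatorname{diag}(K\mathbf 1)$ has diagonal entries $-\sum_{s\neq t}K_{ts}\le 0$, so $A_{tt}=1-\delta t\sum_{s\neq t}K_{ts}$ can be negative without a step-size restriction; you correctly flag that the condition $\delta t<\bigl(\max_t\sum_{s\neq t}K_{ts}\bigr)^{-1}$ (essentially the stability condition the paper only invokes later, in its Stability Theorem) must be added to the hypotheses to keep $A$ entrywise nonnegative. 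Second, the paper cites Perron--Frobenius for irreducible nonnegative matrices as if irreducibility alone guaranteed that some power is entrywise positive; it does not (a cyclic permutation matrix is irreducible but no power is positive). The correct statement requires primitivity, i.e.\ irreducibility plus aperiodicity, and your observation that the strictly positive diagonal of $A$ supplies self-loops (cycles of length $1$) is exactly the missing step. Your self-contained walk-counting variant also buys an explicit bound $L=T-1$, which the paper's Discussion section explicitly laments not having. In short, your proof is not merely an alternative route: it is a repaired and quantitatively sharper version of the paper's argument, and the added hypothesis on $\delta t$ should be considered part of the theorem statement.
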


\begin{proof}
Starting from
\[
H^{(L)} = A^L H^{(0)},
\]
with
\[
A = I + \delta t \Bigl(K - \operatorname{diag}(K \mathbf{1})\Bigr),
\]
observe that \(A\) is a perturbation of the identity matrix by a non-negative matrix. Given the strong connectivity of the directed graph corresponding to \(K\), \(A\) is irreducible. By the Perron–Frobenius theorem for irreducible non-negative matrices, there exists a positive integer \(L\) such that all entries of \(A^L\) are strictly positive. This completes the proof.
\end{proof}

\subsection{Discussion}
This theorem formalizes the intuition that even if the diffusion kernel \(K\) employs local attenuation (for instance, through a Gaussian decay \(g(|t-s|)\) or a directional bias \(\phi(t-s)\)), the strong connectivity ensures that every token is indirectly linked to every other token. Two points merit further consideration:
\begin{enumerate}
    \item \textbf{Quantitative Bound on \(L\):} While the theorem guarantees an \(L\) exists such that \([A^L]_{ij} > 0\) for all \(i,j\), it does not provide a practical bound. In scenarios where the graph induced by \(K\) is sparse, \(L\) might be large, which may affect the efficiency of the diffusion process.
    \item \textbf{Ensuring Strong Connectivity:} The design of \(K\) must ensure that its nonzero pattern yields a strongly connected graph. This is a crucial requirement for achieving global dependency.
\end{enumerate}
In practice, LDN compensates for potential limitations of the pure diffusion process by combining it with a local update \(F\) and a linear attention module \(A_{\text{lin}}\), thereby offering multiple pathways for global information flow.

\subsection{Stability and Convergence of the Diffusion Process}
Next, we examine the stability of the pure diffusion update in isolation. Recall that:
\[
H^{(\ell)} = \left(I + \delta t \Bigl(K - \operatorname{diag}(K \mathbf{1})\Bigr)\right) H^{(\ell-1)},
\]
where \(K \in \mathbb{R}^{T \times T}\) is a learnable kernel with \(K_{ts} \ge 0\) for all \(t,s\), and \(\delta t\) is the adaptive time-step.

\paragraph{Stability Theorem:}  
Assume \(K_{ts} \ge 0\) for all \(t,s\) and choose \(\delta t\) such that:
\[
\delta t \le \min_{t \in \{1,\dots,T\}} \frac{1}{\sum_{s=1}^{T} K_{ts}}.
\]
Then, for every layer \(\ell\) and each time step \(t\), the update can be written as:
\[
h_t^{(\ell)} = \left(1 - \delta t \sum_{s=1}^{T} K_{ts}\right) h_t^{(\ell-1)} + \delta t \sum_{s=1}^{T} K_{ts}\, h_s^{(\ell-1)},
\]
with the coefficients satisfying:
\[
\left(1 - \delta t \sum_{s=1}^{T} K_{ts}\right) \ge 0 \quad \text{and} \quad \left(1 - \delta t \sum_{s=1}^{T} K_{ts}\right) + \delta t \sum_{s=1}^{T} K_{ts} = 1.
\]
Thus, the update is a convex combination of the hidden states from the previous layer, implying:
\[
\|H^{(\ell)}\|_2 \le \|H^{(\ell-1)}\|_2.
\]

\paragraph{Proof:}  
Expressing the update for each \(t\) as:
\[
h_t^{(\ell)} = \alpha_t\, h_t^{(\ell-1)} + \delta t \sum_{s=1}^{T} K_{ts}\, h_s^{(\ell-1)},
\]
where
\[
\alpha_t \triangleq 1 - \delta t \sum_{s=1}^{T} K_{ts},
\]
and noting that \(\alpha_t \ge 0\) with \(\alpha_t + \delta t \sum_{s=1}^{T} K_{ts} = 1\), it follows that each \(h_t^{(\ell)}\) is a convex combination of \(\{h_s^{(\ell-1)}\}_{s=1}^T\). Since convex combinations are non-expansive with respect to the \(\ell_2\) norm, the result holds. \(\blacksquare\)

\paragraph{Implications:}  
This stability result is vital for training LDN. It ensures that the norms of the hidden states do not increase with each layer, preventing issues like exploding gradients. Together with the global mixing properties, this underlines the balanced design of LDN, combining robustness with effective global information propagation.

\end{document}